\theoremstyle{plain}
\newtheorem{problem}{Problem}
\newtheorem{example}{Example}
\begin{document}
\title{Learning Correlation Space for Time Series}

\author{Han Qiu}
 \affiliation{  
  \institution{Massachusetts Institute of Technology}
   \streetaddress{77 Massachusetts Ave}
   \city{Cambridge, MA, USA}
 }
 \email{hanqiu@mit.edu}

 \author{Hoang Thanh Lam}
 \affiliation{
   \institution{IBM Research}
   \streetaddress{IBM Technology Campus}
   \city{Dublin, Ireland}
  }
 \email{t.l.hoang@ie.ibm.com}

 \author{Francesco Fusco}
 \affiliation{%
   \institution{IBM Research}
   \streetaddress{IBM Technology Campus}
   \city{Dublin, Ireland}
  }
 \email{francfus@ie.ibm.com}

 \author{Mathieu Sinn}
 \affiliation{%
   \institution{IBM Research}
   \streetaddress{IBM Technology Campus}
   \city{Dublin, Ireland}
  }
 \email{mathsinn@ie.ibm.com}


\begin{abstract}

We propose an approximation algorithm for efficient correlation search in time series data. In our method, we use Fourier transform and neural network to embed time series into a low-dimensional Euclidean space. The given space is learned such that time series correlation can be effectively approximated from Euclidean distance between corresponding embedded vectors. Therefore, search for correlated time series can be done using an index in the embedding space for efficient nearest neighbor search. Our theoretical analysis illustrates that our method's accuracy can be guaranteed under certain regularity conditions. We further conduct experiments on real-world datasets and the results show that our method indeed outperforms the baseline solution. In particular, for approximation of correlation, our method reduces the approximation loss by a half in most test cases compared to the baseline solution. For top-$k$ highest correlation search, our method improves the precision from 5\% to 20\% while the query time is similar to the baseline approach query time.

\end{abstract}

%
%

\keywords{Time series, correlation search, Fourier transform, neural network}

\maketitle

\section{Introduction}
Given a massive number of time series, building a compact index of time series for efficient correlated time series search queries is an important research problem \cite{Agrawal_1993,Chan_1999}. The classic solutions \cite{Agrawal_1993,Zhu_2002,Mueen_2010} in the literature use Discrete Fourier Transformation (DFT) to transform time series into the frequency domain and approximate the correlation using only the first few coefficients of the frequency vectors. Indeed, people have shown that using only the first 5 coefficients of the DFT is enough to approximate the correlation among stock indices with high accuracy \cite{Zhu_2002}.

Approximation of a time series using the first few coefficients of its Fourier transformation can be considered as a dimension-reduction method that maps long time series into a lower dimensional space with minimal loss of information.
An advantage of such dimension-reduction approaches is that they are unsupervised methods and are independent from use-cases; therefore, they can be used for many types of search queries simultaneously. However, they might not be ideal for applications on particular use-cases where the index is designed just for a specific type of query.

In practice, depending on situations we may serve different types of search queries such as top-$k$ highest correlation search \cite{Dallachiesa_2014}, threshold-based (range) correlation search \cite{Agrawal_1993,Mueen_2010} or even simple approximation of the correlation between any pair of time series.
Different objective function might be needed to optimize performance of different query types. For instance, for the first two query types it is important to preserve the order of the correlation in the approximation, while for the third one it is more important to minimize the approximation error.

\begin{figure}
    \begin{center}
        \includegraphics[width = 1.0\columnwidth]{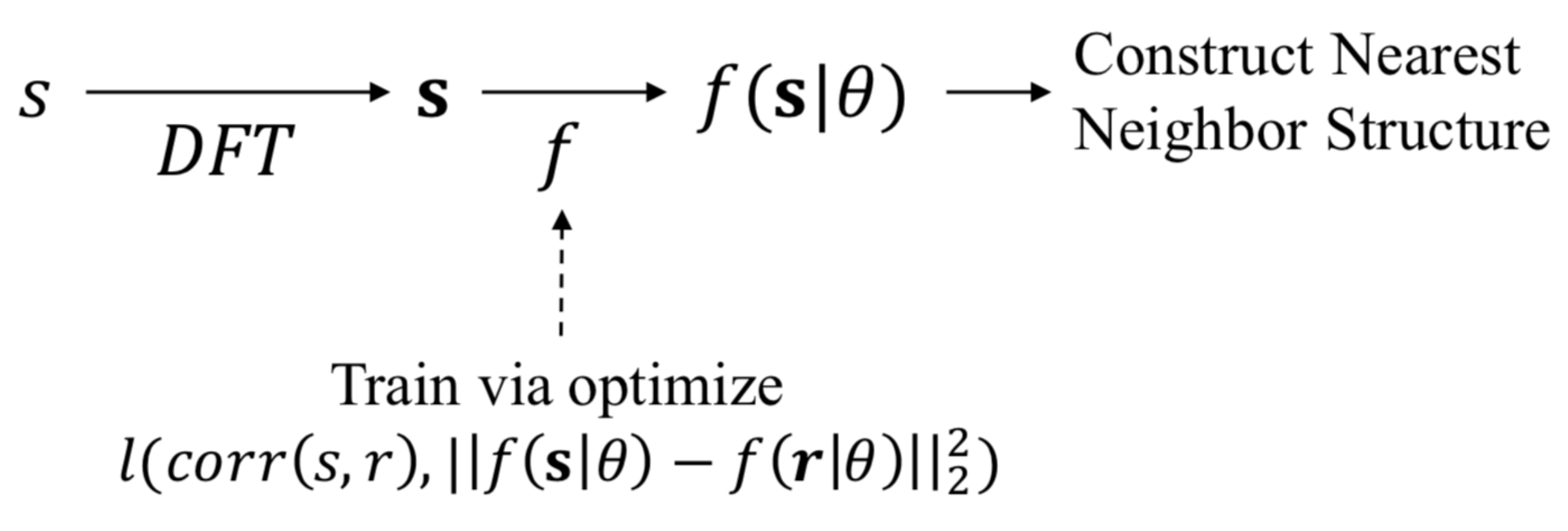}
    \caption{Solution Framework}
    \label{fig:framework}
    \end{center}
\end{figure}

To overcome such problems, in this paper we propose a general framework to optimize query performance for specific combination of datasets and query types.
The key idea behind this framework is sketched in Figure \ref{fig:framework}. Denote $s$ as a time series in the set $S$ and $\mathbf{s} = DFT(s)$ as the discrete Fourier transformation of $s$.
We use a neural network $f(\cdot|\theta)$ with parameters $\theta$ to map $\mathbf{s}$ into a low-dimensional space vector $f(\mathbf{s}|\theta)$. This neural network tries to approximate the correlation between a pair of time series $s$ and $r$ using the Euclidean distance $\|f(\mathbf{s}|\theta)-f(\mathbf{r}|\theta)\|_2$ on the embedding space $f(S)$. We can then utilize classic nearest neighbor search algorithms on this embedding space to index time series and improve query efficiency.

We also notice that specific loss function is needed in training the network to optimize performance of specific type of query. In experiments, we will see that using the right loss function for a specific type of query is crucial in achieveving good results. Therefore, in this paper we also propose relevant loss functions for all aforementioned search query types.

In summary, the contributions of this work are listed as follows:
\begin{itemize}
\item we propose a framework to approximate time series correlation using a time series embedding method;
\item we do theoretical analysis to show non-optimality of the baseline approach from which it motivates this work. We proposed appropriate loss function for each type of query and provide theoretical guarantee for these approaches;
\item we conduct experiments on real-world datasets to show the effectiveness of our approach compared to the classic solutions \cite{Agrawal_1993}: our approach reduces the approximation error at least by a half in the experiments, and improves the precision for top-$k$ correlation search from 5\% to 20\%;
\item we open  the source code for reproducibility of our research\footnote{Omitted for blind review}.
\end{itemize}

The rest of the paper is organized as follows. We discuss background of the paper in section \ref{sec:background}. Three important problems are formulated in section \ref{sec:problem}. In section \ref{sec:theory} we show the non-optimality of the baseline approach from which it motivates this work. We also show theoretical guarantee of  proposed loss functions for each problem formulated in section \ref{sec:problem}. Sections \ref{sec:design} and \ref{sec:train} show how to build  and train the embedding neural networks. Experimental results are discussed in section \ref{sec:experiments} and section \ref{sec:final} concludes the paper.
\section{Background}
\label{sec:background}
 \begin{table}
  \begin{tabular}{ | c | c |}
    \hline
    \textbf{Notations} & \textbf{Meanings}  \\ \hline
    $s,r,u$ & time series \\ \hline
    $\mathbf{s}$ &  DFT of $s$   \\ \hline
     $\bar{s}$ &  Mean value of $s$   \\ \hline
     $\sigma_s$ &  Standard deviation of $s$   \\ \hline
     $\hat{s}$ &  Normalized series of $s$   \\ \hline
     $N^+$ & the set of all positive integers\\ \hline
    $M$ &  time series length   \\ \hline
    $m$ &  embedding size   \\ \hline
    $k$ &  selection set size  \\ \hline
    $\eta$ & selection threshold \\ \hline
    $F$ & top-$k$ selection set \\ \hline
    $\theta$ &  network parameters \\ \hline
    $f(s|\theta)$ &  neural network with parameter $\theta$   \\ \hline
    $\rho$ & query precision \\ \hline
    $\delta$ & query approximation gap \\ \hline
    $\|s-r\|_2$ &  Euclidean distance between $s$ and $r$  \\ \hline
    $d_m(\mathbf{s},\mathbf{r})$ &  Euclidean distance between $\mathbf{s}$ and $\mathbf{r}$   \\
    & considering only the first $m$ elements  \\ \hline
  \end{tabular}
  \caption{Notations} \label{tab:note}
\end{table}
Let $s=[s_1,s_2,\cdots,s_M],r=[r_1,r_2,\cdots,r_M]$ be two time series. The Pearson correlation between $r$ and $s$ is defined as:
\begin{equation}
    corr(s,r) = \frac{\frac{1}{M}\sum_{j=1}^M s_jr_j-\bar{s}\bar{r}}{\sigma_s\sigma_r},
\end{equation}
where $\bar{s} = \sum_{j=1}^M s_j / M$ and $\bar{r} = \sum_{j=1}^M r_j / M$ are the mean values of $s$ and $r$, while $\sigma_s = \sqrt{\sum_{j=1}^M(s_j - \bar{s})^2/M}$ and $\sigma_r = \sqrt{\sum_{j=1}^M(r_j - \bar{r})^2/M}$ are the standard deviations.
If we further define the $l_2$-normalized vector of the time series $s$, $\hat{s} = [\hat{s}_1,\hat{s}_2,\cdots,\hat{s}_M]$, as:
\begin{equation}
    \hat{s}_j = \frac{1}{\sqrt{M}}\frac{s_j-\bar{s}}{\sigma_s} = \frac{s_j-\bar{s}}{\sqrt{\sum_{j=1}^M(s_j - \bar{s})^2}},
\end{equation}

\noindent we can reduce $corr(s,r)$ as $\hat{s}^T\hat{r} = 1 - \|\hat{s} - \hat{r}\|_2^2 / 2$. That is, the correlation between two time series can be expressed as a simple function on the corresponding $l_2$-normalized time series. Therefore, in following discussion, we always consider $l_2$-normalized version $\hat{s}$ of a time series $s$.

The scaled discrete Fourier transformation of the time series $s$, $\mathbf{s}=[\mathbf{s}_1,\mathbf{s}_2,\cdots,\mathbf{s}_M]$, is defined as:
\begin{equation}
    \mathbf{s}_j = \frac{1}{\sqrt{M}}\sum_{l=0}^{M-1} s_le^{\frac{-2jl\pi i}{M}}
\end{equation}
where $i^2 = -1$. We use ``scaled'' here for the factor $\frac{1}{\sqrt{M}}$, which made $\mathbf{s}$ having the same scale as $s$. By the Parseval's theorem \cite{Rudin_1964,Agrawal_1993}, we have $s^Tr = \mathbf{s}^T\mathbf{r}^*$ and $\|s\|_2 = \|\mathbf{s}\|_2$. We can then recover the following property in the literature \cite{Zhu_2002,Mueen_2010}:
\begin{itemize}
\item if $corr(s, r) = s^Tr > 1-\varepsilon^2/2$ then $d_m(\mathbf{s}, \mathbf{r}) < \varepsilon$, where $d_m$ is the Euclidean distance mapped on the first $m$ dimensions with $m < \frac{M}{2}$
\begin{equation}
    d_m^2([\mathbf{s}_1,\mathbf{s}_2,\cdots,\mathbf{s}_M],[\mathbf{r}_1,\mathbf{r}_2,\cdots,\mathbf{r}_M]) = \sum_{j=1}^m\|\mathbf{s}_j - \mathbf{r}_j\|_2^2.
\end{equation}
\end{itemize}

Based on this property, authors in \cite{Agrawal_1993} develop a method to search for highest correlated time series in a database utilizing $d_m^2(\mathbf{s}, \mathbf{r}) \approx 2 - 2\cdot corr(s,r)$. This method is considered as the baseline solution in this paper.

\section{Problem formulation}
\label{sec:problem}
Assume we are given a dataset of regular time series $S\in R^M$, where $M\in N^+$ is a large number. By ``regular'' we mean that the time series are sampled from the same period and with the same frequency (and therefore, having same length). Some correlation search problems can be formulated as follows:

\begin{problem}[Top-$k$ Correlation Search]\label{prob:top-k}
Find a method $F$ that for any given time series $s\in S$ and count $k\in N^+$, $F(s,k|S)$ provides a subset of $S$ that includes the $k$ highest correlated time series with respect to $s$. More formally,
\begin{equation}\label{F}
    \begin{aligned}
    F(s,k|S) & = \arg\max_{\{ S'|S'\subset S,|S'|=k \}}\sum_{r\in S'} corr(s,r) \\
    & = \arg\min_{\{ S'|S'\subset S,|S'|=k \}}\sum_{r\in S'} \|s-r\|_2^2
    \end{aligned}
\end{equation}
\end{problem}

\begin{problem}[Threshold Correlation Search]\label{prob:thres}
Find a method $G$ that for any given time series $s\in S$ and threshold value $\eta\in (0,1)$, $G(s,\eta|S)$ provides a subset of $S$ that consists of all time series $r$ with $corr(s,r) \geq \eta$. More formally,
\begin{equation}
    G(s,\delta|S) = \{r\in S| corr(s,r) \geq \eta\}
\end{equation}
\end{problem}

\begin{problem}[Correlation Approximation]\label{prob:approximation}
Find an embedding function $f: R^M \rightarrow R^m$ that minimizes the expected approximation error in correlations
\begin{equation}
    f(\cdot|S) = \arg\min_{f} E_{s,r\in S}|\|f(s)-f(r)\|_2^2 - 2(1-corr(s,r))|
\end{equation}
\end{problem}

Solutions of these problems are related with each other. First, Problem \ref{prob:top-k} and Problem \ref{prob:thres} are almost equivalent. For instance, if the set $\{corr(s,r)|r\in S\}$ contains no identical elements, and if we let $\eta(s,k|S)$ to be the $k$th largest element in this set, we will have $G(s,\eta(s,k|S)|S) = F(s,k|S)$. Therefore, we only need to discuss one of them; in this paper we will focus on Problem \ref{prob:top-k}.

Second, solutions of Problem \ref{prob:approximation} can lead to good approximation for Problem \ref{prob:top-k}. If there is an $f$ such that the objective value of Problem \ref{prob:approximation} is close to 0, or
\begin{equation}\nonumber
    \|f(s) - f(r)\|_2^2 \approx 2(1 - corr(s,r)),
\end{equation}

\noindent we can consider the following approximation of Problem  \ref{prob:top-k}
\begin{equation}\label{F_hat}
    \hat{F}(s,k|f,S) = \arg\min_{\{ S'|S'\subset S,|S'|=k \}}\sum_{r\in S'} \|f(s) - f(r)\|_2^2
\end{equation}

\noindent combined with top-$k$ nearest neighbor search structures on the embedding space $f(S)$. If the dimension $m$ of the embedded space is small, we can use $k$-d tree \cite{Bentley_1975}. With a balanced $k$-d tree, the complexity of searching the top $k$ nearest neighbors among $n$ candidates reduces to $O(k\log n)$ \cite{Friedman_1977}.

All problems defined above have several important applications. In time series forecasting, one can search for strongly correlated time series in historical database and then use them to predict the target time series. In data storage, when the whole time series dataset is very large and does not fit available memory, one can compress the time series significantly using the embedding with small trade-off for loss due to correlation approximation.

\section{Theoretical analysis}
\label{sec:theory}
This section discusses the theoretical analysis to support our methods. We first show that DFT is not an optimal solution for Problem \ref{prob:top-k} and Problem \ref{prob:approximation} in general. This result motivates our work as there is room for improvement over the baseline if data has specific structure that our learning algorithm can leverage. Next we show that in order to solve Problem \ref{prob:top-k}, it suffices to approximate the $l_2$ norm on $S$ with $l_2$ norm on $f(S)$. We then propose an appropriate loss function to solve Problem \ref{prob:top-k} effectively.

\subsection{Fourier transform's non-optimality}
In the literature \cite{Zhu_2002}, it has been shown that using only the first 5 DFT coefficients, it is enough to approximate solutions for Problem \ref{prob:top-k} on stocks data with high accuracy. However, there is still much room for improvement over the methods based on DFT as shown in the experimental section. This is because DFT approximations blindly assumes that most of the energy (information) in the time series is in the low-frequency components. In the two following examples, we illustrate that the naive DFT method might neither accurately approximate the correlation function nor extract the most important information explaining dataset variability, if the information in the high-frequency components of the time series is non-negligible. Use-case-specific dimension reduction methods such as neural network can avoid such bias and therefore are more preferable.

\begin{example}\label{example:approx}
    Consider a set $S$ of real-valued time series $s$, where the corresponding DFT vector $\mathbf{s}$ of each $s$ has following property
    \begin{equation}\nonumber
        \mathbf{s}_i = \mathbf{s}_{M/4+i}, \ \forall i=1,\cdots,M/4.
    \end{equation}

    Recall that for DFT vector $\mathbf{s}$ of real-valued time series $s$, we have conjugacy $\mathbf{s}_i = \mathbf{s}^*_{M+1-i}$ for $i=1,\cdots,M$. That implies $4d_m^2(\mathbf{s}, \mathbf{r}) = \|\mathbf{s} - \mathbf{r}\|_2^2 = 2 - 2corr(s,r)$ for $m=M/4$.
    The optimal embedding with size $m$ for the correlation function approximation should therefore be $f(\mathbf{s}) = [2\mathbf{s}_1,2\mathbf{s}_2,\cdots,2\mathbf{s}_m]$ which differs from the embedding using DFT a constant factor of $2$.
\end{example}

\begin{example}\label{example:order}
    Consider a set $S$ of similar time series $s$, where the corresponding DFT vector $\mathbf{s}$ of each $s$ is generated as follows:
    \begin{equation}
        \begin{aligned}
            \mathbf{s}_i & = \mathbf{\mu}_{i}^0 + z_i\mathbf{\sigma}_{i}^0, \ \forall i=1,\cdots,M \\
            z_i & \sim
            \begin{cases}
                N(0,\varepsilon) & \forall i=1,\cdots,m\\
                N(0,1) & \forall i=m+1,\cdots,M\\
            \end{cases}
        \end{aligned}
    \end{equation}
    \noindent where $\varepsilon \ll 1$, $\mathbf{\sigma}_{i}^0\neq 0$, and $z_i$ are sampled independently.
    It is obvious that for Problem \ref{prob:top-k}, DFT method has almost same precision as random guess.
\end{example}

\subsection{Approximating top-$k$ correlated series}
In this subsection, we analyze the theoretical performance guarantee of approximation \eqref{F_hat}. Using notation $F(s,k|S)$ and $\hat{F}(s,k|f,S)$, Problem \ref{prob:top-k} is equivalent to maximize the precision $\rho$ on $f$ given $s$ and $k$
\begin{equation}\label{opt_accu}
    \rho(s,k,f|S) = \frac{1}{k}|\hat{F}(s,k|f,S) \bigcap F(s,k|S)|,
\end{equation}

\noindent or to minimize the approximation gap $\delta$ on $f$
\begin{equation}\label{opt_dist}
    \delta(s,k,f|S) = \frac{1}{k}[\sum_{r\in \hat{F}(s,k|f,S)} \|s-r\|_2^2 - \sum_{r\in F(s,k|S)}\|s-r\|_2^2] .
\end{equation}

If the mapping $f$ well preserves the metric, that is, the gap between $\|s-r\|_2^2$ and $\|f(s)-f(r)\|_2^2$ is uniformly small, we can prove that the approximation gap $\delta$ is also quite small. Formally, we have:
\begin{theorem}\label{theorem:top-k}
	If for a function $f:R^M\to R^m$, we have
    \begin{equation}\nonumber
        |\|f(s)-f(r)\|_2^2 - \|s-r\|_2^2| \leq \varepsilon, \ \forall s,r\in S,
    \end{equation}
    \noindent where $\varepsilon\in R^+$, then we also have
    \begin{equation}\nonumber
        \delta(s,k,f|S) \leq 2\varepsilon, \ \forall s\in S, k\in N^+.
    \end{equation}
\end{theorem}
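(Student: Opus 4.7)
The plan is to compare the original-space squared-distance sums over $F(s,k|S)$ and $\hat{F}(s,k|f,S)$ by routing through the embedding-space sums, using the uniform $\varepsilon$-approximation together with the optimality of each selection set in its own space.

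First, I would abbreviate the four relevant quantities as
\begin{align*}
A &= \sum_{r \in F(s,k|S)} \|s-r\|_2^2, & A' &= \sum_{r \in F(s,k|S)} \|f(s)-f(r)\|_2^2,\\
B &= \sum_{r \in \hat{F}(s,k|f,S)} \|s-r\|_2^2, & B' &= \sum_{r \in \hat{F}(s,k|f,S)} \|f(s)-f(r)\|_2^2.
\end{align*}
Since $k\,\delta(s,k,f|S) = B - A$, the goal reduces to proving $B - A \leq 2k\varepsilon$.

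Next, I would apply the hypothesis termwise to each of the $k$ summands in the pairs $(A,A')$ and $(B,B')$, which by subadditivity of the absolute value yields $|A - A'| \leq k\varepsilon$ and $|B - B'| \leq k\varepsilon$. Then I would invoke the two optimality facts built into the definitions: $A \leq B$ because $F(s,k|S)$ minimizes the original-space sum over all $k$-subsets of $S$ (equation \eqref{F}), and $B' \leq A'$ because $\hat{F}(s,k|f,S)$ is the corresponding minimizer in the embedding space (equation \eqref{F_hat}).

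Finally, I would chain these facts via
\begin{equation*}
B - A = (B - B') + (B' - A') + (A' - A) \leq k\varepsilon + 0 + k\varepsilon = 2k\varepsilon,
\end{equation*}
and divide by $k$ to conclude. The argument is essentially a two-sided sandwich, so the only obstacle is lining up the signs of the two optimality inequalities with the two $\varepsilon$-bounds; nonnegativity $\delta \geq 0$ follows automatically from $A \leq B$, so only the upper bound actually exploits the embedding hypothesis.
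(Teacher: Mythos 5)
Your proposal is correct and is essentially the paper's own argument: the chain $B \leq B' + k\varepsilon \leq A' + k\varepsilon \leq A + 2k\varepsilon$ is exactly the three-step inequality in the paper's proof, just written as a telescoping decomposition of $B-A$. The only addition is your remark that $A \leq B$ gives $\delta \geq 0$, which is a harmless aside.
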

\begin{proof}
    In fact, we have
    \begin{equation}
    \begin{aligned}
        & \sum_{r\in \hat{F}(s,k|f,S)} \|s-r\|_2^2 \\
        \leq & \sum_{r\in \hat{F}(s,k|f,S)} \|f(s)-f(r)\|_2^2 + k\varepsilon \\
        \leq & \sum_{r\in F(s,k|S)} \|f(s)-f(r)\|_2^2 + k\varepsilon \\
        \leq & \sum_{r\in F(s,k|S)} \|s-r\|_2^2 + 2k\varepsilon
    \end{aligned}
    \end{equation}
    \noindent holds for every $s\in S$ and $k\in N^+$. The second inequality above holds since $\hat{F}(s,k|f,S)$ solves the corresponding minimization problem \eqref{F_hat}. The conclusion then follows trivially.
\end{proof}

Theorem \ref{theorem:top-k} provides us a sanity check that approximation \eqref{F_hat} becomes more accurate (in terms of $\delta$) as function approximation $f$ gets better. We can therefore design the following loss function on parametric mapping $f(\cdot|\theta)$ for optimization
\begin{equation}\label{l_delta}
	L_{\delta}(\theta) = E_{s,r\in S}|\|f(s|\theta)-f(r|\theta)\|_2^2-\|s-r\|_2^2|.
\end{equation}


Theorem \ref{theorem:top-k} can also be extended for more general distance $d(\cdot,\cdot)$ on $R^m\times R^m$.

\begin{theorem}\label{theorem:top-k-gen}
    Consider distance $d(\cdot,\cdot)$ on $R^m\times R^m$ and corresponding set approximation
    \begin{equation}\label{F_hat_gen}
        \hat{F}_d(s,k|f,S) = \arg\min_{\{ S'|S'\subset S,|S'|=k \}}\sum_{r\in S'} d(f(s),f(r)).
    \end{equation}
	If for a function $f:R^M\to R^m$, we have
    \begin{equation}\nonumber
        |d(f(s),f(r)) - \|s-r\|_2^2| \leq \varepsilon, \ \forall s,r\in S,
    \end{equation}
    \noindent where $\varepsilon\in R^+$, then we also have
    \begin{equation}\nonumber
        \delta_d(s,k,f|S) \leq 2\varepsilon, \ \forall s\in S, k\in N^+,
    \end{equation}
    \noindent where $\delta_d$ is defined as
    \begin{equation}
        \delta_d(s,k,f|S) = \frac{1}{k}[\sum_{r\in \hat{F}_d(s,k|f,S)} \|s-r\|_2^2 - \sum_{r\in F(s,k|S)}\|s-r\|_2^2] .
    \end{equation}
\end{theorem}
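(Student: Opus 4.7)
The plan is to mimic the three-step chain from the proof of Theorem \ref{theorem:top-k} exactly, substituting the generalized distance $d(f(\cdot),f(\cdot))$ wherever $\|f(\cdot)-f(\cdot)\|_2^2$ appeared. The whole argument there never used that the quantity was literally a squared Euclidean norm on the embedding side; it only used (i) a uniform approximation bound relating that quantity to $\|s-r\|_2^2$, and (ii) the fact that $\hat{F}(s,k|f,S)$ was the minimizer of the sum of those quantities. Both properties survive unchanged once we replace $\|f(s)-f(r)\|_2^2$ with $d(f(s),f(r))$, so the same skeleton should work.

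Concretely, I would fix $s\in S$ and $k\in N^+$, and start from $\sum_{r\in\hat{F}_d(s,k|f,S)} \|s-r\|_2^2$. First step: apply the hypothesis $|d(f(s),f(r))-\|s-r\|_2^2|\le\varepsilon$ to bound this from above by $\sum_{r\in\hat{F}_d(s,k|f,S)} d(f(s),f(r)) + k\varepsilon$. Second step: use the defining optimality of $\hat{F}_d$ in \eqref{F_hat_gen}, namely that it minimizes $\sum_{r\in S'} d(f(s),f(r))$ over all $k$-subsets $S'$, to replace $\hat{F}_d(s,k|f,S)$ by the competitor $F(s,k|S)$ without increasing the sum. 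Third step: apply the hypothesis once more in the other direction to bound $\sum_{r\in F(s,k|S)} d(f(s),f(r))$ by $\sum_{r\in F(s,k|S)} \|s-r\|_2^2 + k\varepsilon$. Chaining these gives
\begin{equation}\nonumber
\sum_{r\in \hat{F}_d(s,k|f,S)} \|s-r\|_2^2 \;\le\; \sum_{r\in F(s,k|S)} \|s-r\|_2^2 + 2k\varepsilon,
\end{equation}
from which dividing by $k$ yields $\delta_d(s,k,f|S)\le 2\varepsilon$.

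There is essentially no main obstacle here: the argument is a verbatim reuse of the previous proof, because the role of "squared Euclidean distance on the embedding side" was already just as a generic nonnegative dissimilarity being controlled by the hypothesis. The only thing worth being explicit about is that $d$ is not assumed to satisfy any metric axioms beyond what the statement says, and none are needed. I would simply note in the writeup that the two invocations of the hypothesis are in opposite directions (upper bound on $\|s-r\|_2^2$ in terms of $d(f(s),f(r))$, and then vice versa), and that no property of the minimizer $\hat{F}_d$ beyond its definition is used.
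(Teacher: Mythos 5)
Your proposal is correct and is exactly the argument the paper intends: the paper's own proof of Theorem \ref{theorem:top-k-gen} simply states that one proceeds as in Theorem \ref{theorem:top-k}, and your three-step chain (hypothesis, optimality of $\hat{F}_d$ from \eqref{F_hat_gen}, hypothesis again) is that argument spelled out. Your observation that no metric axioms on $d$ are needed beyond the uniform approximation bound is also accurate.
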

\begin{proof}
    This theorem can be proved by proceeding with the same argument as in the proof of Theorem \ref{theorem:top-k}.
\end{proof}

Although we have confidence that a good metric-approximating $f$ can lead to low approximation gap $\delta$, there is no much to say about the precision $\rho$.
For instance, assume we are given a group of time series $S$ and a target series $s$, and for each $r\in S$ the correlation $corr(s,r)$ is close to others. Or, we can assume $|corr(s,r) - c_0|<\varepsilon$ where $c_0,\varepsilon$ are fixed values.
In this case, an $f$ with low approximation error $|\|f(s)-f(r)\|_2^2 - \|s-r\|_2^2|$, and therefore low gap $\delta$, might still confuse the relative order of the correlation $corr(s,r)$ and distance $\|s-r\|_2^2|$. This will result in a low precision $\rho$.
To improve practical approximation performance on $\rho$ for Problem \ref{prob:top-k}, we propose a loss function that tries to approximate correlation order

\begin{equation}\label{l_rho}
    \begin{aligned}
	       L_{\rho}(\theta) = E_{s,r,u\in S}| & (\|f(r)-f(s)\|_2^2 - \|f(r)-f(u)\|_2^2) \\
           & - (\|r-s\|_2^2 - \|r-u\|_2^2)|.
    \end{aligned}
\end{equation}

The following corollary from Theorem \ref{theorem:top-k} shows that we can also obtain performance guarantee on $\delta$ for network optimizing $L_{\rho}$.
\begin{corollary}\label{cor:top-k}
	If for a function $f:R^M\to R^m$, we have
    \begin{equation}\nonumber
        \begin{aligned}
            |(\|f(r)-f(s)\|_2^2 - & \|f(r)-f(u)\|_2^2) \\
             - & (\|r-s\|_2^2 - \|r-u\|_2^2)| \leq \varepsilon, \ \forall r, s, u\in S,
        \end{aligned}
    \end{equation}
    \noindent where $\varepsilon\in R^+$, then we also have
    \begin{equation}\nonumber
        \delta(s,k,f|S) \leq 2\varepsilon, \ \forall s\in S, k\in N^+.
    \end{equation}
\end{corollary}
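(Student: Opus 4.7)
The plan is to reduce the triple hypothesis of Corollary \ref{cor:top-k} to the pairwise hypothesis of Theorem \ref{theorem:top-k} up to an additive constant that is invisible to the argmin, and then re-use the sandwich chain from that theorem's proof.

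First, I would instantiate the hypothesis with its first argument equal to the query $s$. For all $b, c \in S$ this gives
$$\bigl|[\|f(s)-f(b)\|_2^2 - \|s-b\|_2^2] - [\|f(s)-f(c)\|_2^2 - \|s-c\|_2^2]\bigr| \le \varepsilon,$$
so the deviation $\Psi(r) := \|f(s)-f(r)\|_2^2 - \|s-r\|_2^2$ has oscillation at most $\varepsilon$ as $r$ ranges over $S$.

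Second, I would pin an arbitrary reference $r_0 \in S$ and set $C := \Psi(r_0)$. The previous step then yields the pairwise approximation $|\|f(s)-f(r)\|_2^2 - (\|s-r\|_2^2 + C)| \le \varepsilon$ for every $r \in S$. Crucially, $C$ depends only on $s$ (and on $r_0$), not on $r$, so adding $C$ to every squared distance does not change which $k$-element subset of $S$ minimizes the sum; equivalently, one may apply Theorem \ref{theorem:top-k-gen} to the function $d(x,y) := \|x-y\|_2^2 - C$ and note that $\hat F_d(s,k|f,S) = \hat F(s,k|f,S)$ and $\delta_d = \delta$ in this case.

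Third, I would just replay the three-line chain from the proof of Theorem \ref{theorem:top-k}, replacing $\|f(s)-f(r)\|_2^2$ by $\|s-r\|_2^2 + C$ at a cost of $\varepsilon$ per term. On both the $\hat F$ and $F$ sides an additive $kC$ appears and cancels, leaving $\sum_{r\in\hat F}\|s-r\|_2^2 \le \sum_{r\in F}\|s-r\|_2^2 + 2k\varepsilon$, which is exactly $\delta(s,k,f|S) \le 2\varepsilon$. The only non-mechanical step is recognizing that the triple condition, once its first argument is specialized to $s$, is really a pairwise approximation of $\|s-r\|_2^2$ by $\|f(s)-f(r)\|_2^2$ modulo an $s$-dependent constant; after that, the rest is a verbatim repeat of the argument for Theorem \ref{theorem:top-k}.
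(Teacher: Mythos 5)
Your proof is correct, but it takes a genuinely different route from the paper's. The paper proves Corollary \ref{cor:top-k} in one line: setting $u=r$ in the triple hypothesis makes the second and fourth terms vanish, which collapses it to exactly the pairwise hypothesis $|\|f(r)-f(s)\|_2^2-\|r-s\|_2^2|\leq\varepsilon$ of Theorem \ref{theorem:top-k}, and the conclusion follows by direct invocation of that theorem. You instead specialize the first argument to the query $s$, observe that the deviation $\Psi(r)=\|f(s)-f(r)\|_2^2-\|s-r\|_2^2$ has oscillation at most $\varepsilon$, absorb the unknown offset into a constant $C$ that is invisible to the $k$-subset argmin in \eqref{F_hat}, and replay the sandwich chain. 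The paper's argument is shorter, but it leans on the degenerate instantiation $u=r$ of the hypothesis; yours never uses that diagonal, so it establishes the conclusion under a strictly weaker assumption (only triples whose first coordinate is the query, with $s,u$ generic). That is arguably closer to what a network trained on the loss $L_{\rho}$ in \eqref{l_rho} can be expected to satisfy, since the training distribution essentially never enforces the degenerate case. As a further bonus of your route: choosing $C$ as the midpoint of $\sup_r\Psi(r)$ and $\inf_r\Psi(r)$ rather than $\Psi(r_0)$ gives $|\Psi(r)-C|\leq\varepsilon/2$, which sharpens the conclusion to $\delta(s,k,f|S)\leq\varepsilon$, a factor of two better than the stated bound. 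Your parenthetical appeal to Theorem \ref{theorem:top-k-gen} with $d(x,y)=\|x-y\|_2^2-C$ is fine since that theorem's proof uses no distance axioms, though strictly $d$ need not be nonnegative; the direct replay of the chain avoids even that cosmetic issue.
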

\begin{proof}
This reduce to Theorem \ref{theorem:top-k} if we set $r=u$.
\end{proof}

At the end of this section, we notice that the uniform property on $\epsilon_{s,r} = \|s-r\|_2^2 - d(f(s),f(r))$ might be too restrictive, while in general neural network can only attain some probabilistic bounds. Since we have
\begin{equation}\label{delta_form}
    \begin{aligned}
            \delta_d(s,k,f|S) & \leq \frac{1}{k}[\sum_{r\in \hat{F}_d(s,k|f,S)} \epsilon_{s,r} - \sum_{r\in F(s,k|S)}\epsilon_{s,r}] \\
            & \leq \frac{1}{k}|\sum_{r\in \hat{F}_d(s,k|f,S)} \epsilon_{s,r}| + \frac{1}{k}|\sum_{r\in F(s,k|S)}\epsilon_{s,r}|,
    \end{aligned}
\end{equation}

\noindent we can apply concentration inequalities to obtain following probabilistic bound

\begin{theorem}\label{theorem:top-k-prob}
	Assume for any given $s$, $\epsilon_{s,r}$ are independently and identically distributed zero-mean random variables on $R$. If we further assume
    \begin{equation}\nonumber
        E\epsilon_{s,r}^2 \leq \varepsilon, |\epsilon_{s,r}| \leq M \ a.s., \ \forall r \in S,
    \end{equation}
    we have
    \begin{equation}\nonumber
        Pr(\delta(s,k,f|S) \geq 2c\varepsilon) \leq 4\exp(\frac{-k c^2 \varepsilon}{2 + \frac{2}{3}Mc}), \ \forall c \geq 1.
    \end{equation}
\end{theorem}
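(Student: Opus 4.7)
The plan is to combine the deterministic bound \eqref{delta_form} with a classical Bernstein-type concentration inequality applied separately to each of the two sums of $\epsilon_{s,r}$ on the right-hand side. Since \eqref{delta_form} already reduces $\delta_d(s,k,f|S)$ to controlling $\frac{1}{k}|\sum_{r\in\hat{F}_d(s,k|f,S)}\epsilon_{s,r}|$ and $\frac{1}{k}|\sum_{r\in F(s,k|S)}\epsilon_{s,r}|$, nothing further needs to be said about the minimization structure; the argument is purely probabilistic from here on.

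First I would recall Bernstein's inequality in the following standard form: if $X_1,\ldots,X_k$ are i.i.d., zero-mean, with $\mathrm{Var}(X_i)\leq\sigma^2$ and $|X_i|\leq M$ a.s., then
\begin{equation*}
\Pr\Bigl(\bigl|\sum_{i=1}^{k} X_i\bigr|\geq t\Bigr)\leq 2\exp\Bigl(\frac{-t^2/2}{k\sigma^2+Mt/3}\Bigr).
\end{equation*}
Plugging in $\sigma^2=\varepsilon$ and $t=kc\varepsilon$ and simplifying algebraically produces exactly the tail $2\exp\bigl(\tfrac{-kc^2\varepsilon}{2+\tfrac{2}{3}Mc}\bigr)$ appearing in the statement of the theorem, which is the first thing I would verify before doing anything else, since the specific constants $2$ and $2/3$ in the denominator are what identify the invoked inequality.

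Next I would apply this bound to each of the two sums in \eqref{delta_form} and take a union bound. Under the i.i.d., zero-mean, bounded-variance and bounded-magnitude hypotheses, each size-$k$ sum of $\epsilon_{s,r}$'s is controlled by $kc\varepsilon$ in absolute value except on an event of probability at most $2\exp\bigl(\tfrac{-kc^2\varepsilon}{2+\tfrac{2}{3}Mc}\bigr)$. On the complement of the union of the two bad events, the inequality \eqref{delta_form} gives $\delta_d(s,k,f|S)\leq c\varepsilon+c\varepsilon=2c\varepsilon$, and the union-bound factor $2$ doubles the tail from $2$ to $4$, which accounts for the leading constant $4$ in the claimed probability bound.

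The main obstacle, and the step where care is required, is that $\hat{F}_d(s,k|f,S)$ is in principle a data-dependent random set, and Bernstein's inequality is stated for a sum over a \emph{deterministic} index set of size $k$. I would address this by treating $f$ and the embedded distances $d(f(s),f(r))$ as fixed before conditioning on the randomness in $\epsilon_{s,r}$, so that both $F(s,k|S)$ (determined by $\|s-r\|_2^2$) and $\hat{F}_d(s,k|f,S)$ (determined by $d(f(s),f(r))$) become deterministic functions of the problem data, and only the difference $\epsilon_{s,r}=\|s-r\|_2^2-d(f(s),f(r))$ is viewed as random under the theorem's probabilistic model. Once that clarification is in place, Bernstein applies directly to each fixed-index sum, and the rest is a two-event union bound.
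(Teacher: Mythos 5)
Your proposal is correct and follows essentially the same route as the paper's own proof: decompose via \eqref{delta_form}, apply Bernstein's inequality with $t=kc\varepsilon$ and $\sigma^2=\varepsilon$ to each of the two size-$k$ sums, and take a union bound to obtain the factor $4$, with the same algebraic simplification of the exponent. Your additional remark about conditioning so that $\hat{F}_d(s,k|f,S)$ and $F(s,k|S)$ are treated as fixed index sets is a useful clarification that the paper leaves implicit, but it does not change the argument.
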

\begin{proof}
We can then apply the Bernstein inequality on each part of the right hand side of \eqref{delta_form}:
\begin{equation}\nonumber
    \begin{aligned}
        Pr(\delta(s,k,f|S) \geq 2c\varepsilon) \leq \ & Pr(|\sum_{r\in \hat{F}_d(s,k|f,S)} \epsilon_{s,r}| \geq kc\varepsilon) \\
        & + Pr(|\sum_{r\in F(s,k|S)}\epsilon_{s,r}| \geq kc\varepsilon) \\
        \leq \ & 4 \exp(\frac{-(k c \varepsilon)^2}{2k\varepsilon + \frac{2}{3}Mkc\varepsilon}) \\
        \leq \ & 4 \exp(\frac{-k c^2 \varepsilon}{2 + \frac{2}{3}Mc}).
    \end{aligned}
\end{equation}
\end{proof}

Such probabilistic bound is less ideal than the deterministic bound in Theorem \ref{theorem:top-k}. It is more effective for accuracy of threshold-based queries since the corresponding $k$ will be much greater than in the top-$k$ queries and the bound will be much tighter.

\section{Design of Embedding Structure}
\label{sec:design}
In last section, we show that if we have an accurate embedding function $f$, the approximation of our correlation search will also be accurate. Nevertheless, it is not easy to find such an $f$. In this section we will discuss several potential neural network structures that we considered.

\subsection{Embedding on the time-domain}

For direct embedding of time series, a natural selection is the Recurrent neural network (RNN). RNN is a suitable model for sequential data, and recently it has been widely applied in sequence-related tasks such as machine translation \cite{Bahdanau_2014} and video processing \cite{Donahue_2015}. Specifically, given a sequence $[X_1,\cdots,X_M]$, a (single-layer) RNN tries to encode first $m$ observations $[X_1,\cdots,X_m]$ into state $h_m$ with the following dynamic processes
\begin{equation}
	h_{m+1} = g(X_{m+1},h_m)
\end{equation}

\noindent where $g$ is the same function across $m$ and is called a ``unit''. Common choices of RNN units include Long-short term memory (LSTM) \cite{Hochreiter_1997} and Gate recurrent unit (GRU) \cite{Cho_2014}. We can then take the last state $h_M$ as the final embedding vector
\begin{equation}
	f(s) = h_M.
\end{equation}
This approach has a disadvantage that the training algorithm is very slow when the time series is long because of the recurrent computations. Moreover, in the experiments we observed that this method does not give good results for long time series.

\subsection{Embedding on the frequency-domain}

To avoid the time-consuming and ineffective recurrent computations, we first use DFT to transform a time series $s$ into a frequency domain $\mathbf{s} = DFT(s)$. Next, we use a multi-layer fully connected (dense) network with ReLU activation functions
\begin{equation}\nonumber
\text{ReLU}(x) = \max\{x,0\}
\end{equation}

\noindent to transform the frequency vector $\mathbf{s}$ into an embedded vector with a desirable size. At last, we applied $l_2$ normalization to the embedding vector to project it on a unit sphere. This normalization step realign the scale of the Euclidean distances between embedding vectors with the correlation we aim to approximate.

\section{Training embedding neural networks}
\label{sec:train}
In this section we describe how we train the embedding neural networks for different objective functions.
\subsection{Approximation of correlation}
For Problem \ref{prob:approximation}, we sample a random batch of pair of time series $(s, r)$ and minimize the following loss function at each training iteration:
\begin{equation}\nonumber
   L_{approximate} = |\|f(\mathbf{s}|\theta) - f(\mathbf{r}|\theta)\|_2^2 - 2(1 - corr(s,r))|.
\end{equation}

We call this method \textit{Chronos\footnote{Chronos in Greek means time.} Approximation} algorithm in the experiments.

\subsection{Top-$k$ correlation search}
For Problem \ref{prob:top-k}, we sample a random batch of tuple of time series $(s, r, u)$ and minimize the following loss function at each training iteration:
\begin{equation}\nonumber
    \begin{aligned}
        L_{order} = & |(\|f(\mathbf{r}|\theta)-f(\mathbf{s}|\theta)\|_2^2 - \|f(\mathbf{r}|\theta)-f(\mathbf{u}|\theta)\|_2^2) \\
        & - 2(corr(r,u) - corr(r,s))|
    \end{aligned}
\end{equation}
We call this method \textit{Chronos Order} algorithm in the experiments because it tries to approximate the correlation order. According to Corrolary \ref{cor:top-k}, if we optimize $L_{order}$ we also approximate the solutions for Problem \ref{prob:top-k}.
\subsection{Symmetry correction}
Recall that when  $s$ is a real number time series, its DFT $\mathbf{s}$ is symmetric, i.e. $\mathbf{s}_i$ is a complex conjugate of $\mathbf{s}_{M-i}$. Because of this reason, if $M=2*m$, we should have $d_M(\mathbf{s}, \mathbf{r}) = 2*d_m(\mathbf{s}, \mathbf{r}) = 2 - 2corr(s,r)$. Thanks to the symmetry property, people only need to use $m < \frac{M}{2}$ coefficients to approximate the correlation. In such case the approximation of  $2 - 2corr(s,r)$ is corrected to $2*d_m(\mathbf{s}, \mathbf{r})$.

Therefore, in our implementation of the Chronos  algorithms we use a corrected version of the loss functions as follows to take into account the symmetry of the embedding:
\begin{equation}\nonumber
   L_{approximate} = |2\|f(\mathbf{s}|\theta) - f(\mathbf{r}|\theta)\|_2^2 - 2(1 - corr(s,r))|.
\end{equation}
\begin{equation}\nonumber
    \begin{aligned}
        L_{order} = & |2(\|f(\mathbf{r}|\theta)-f(\mathbf{s}|\theta)\|_2^2 - \|f(\mathbf{r}|\theta)-f(\mathbf{u}|\theta)\|_2^2) \\
        & - 2(corr(r,u) - corr(r,s))|
    \end{aligned}
\end{equation}

\subsection{Datasets}
We used four real-world datasets to validate our approaches. One of these datasets is the daily stock indices crawled from the Yahoo finance \cite{YahooFin} using public python API. The other datasets are chosen from the UCR time series classification datasets \cite{UCRArchive}. These datasets were chosen because each time series is long (at least 400 readings) and the number of time series is large enough (at least a few thousands). Overall the characteristics of these datasets are described in Table \ref{tab:data size}. We preprocessed the data by adding dummy timestamps to the UCR time series classification datasets since our scripts do not assume time series sampled with the same resolution. The size of the data reported in Table \ref{tab:data size} is the disk size calculated based on the du command in Linux.

\subsection{Experiment settings}
For each dataset, we randomly split the data into training, validation and test with the ratios 80-10-10\%. We use the training set to train the embedding neural networks and validate it on the validation set. The test set is used to estimate the query accuracy. In particular, for the top-$k$ highest correlation queries, each time series in the test set is considered as a target series, and all the other series in the training set is considered as the database from which we look for correlated time series to the target.  For correlation approximation queries, we randomly permute the test series and create pairs of time series from the permutation. Since the test set is completely independent from training and validation, the reported results are the proper estimate of the query accuracy in practice.
\begin{table}
  \begin{tabular}{ | c | c | c | c |}
    \hline
    \textbf{Data} & \textbf{\# time series} & \textbf{length} & \textbf{size} \\ \hline
    Stock & 19420 & 1000 & 400  MB \\ \hline
    Yoga &  3300 &  426 & 26 MB   \\ \hline
    UWaveGesture &  4478 & 945  & 71 MB   \\ \hline
    StarLightCurves &  9236 &  1024 & 181 MB   \\ \hline
  \end{tabular}
  \caption{Datasets used in experiments} \label{tab:data size}
\end{table}

We fix parameters of the neural network to the default values described in Table \ref{tab:parameters}. Although we didn't fine-tune the network hyper-parameters and explore deeper neural network structures, the obtained results are already significantly better than the baseline algorithms. In practice, automatic network search and tuning can be done via Bayesian optimization \cite{Snoek_2012}. Since search for the optimal network structure and optimal network hyper-parameters is out of the scope of this work, we leave this problem as future work.

\begin{table}[h]
  \small
  \begin{center}
  \caption {Parameter settings} \label{tab:parameters}
  \begin{tabular}{ | c | c | }

    \hline
    \textbf{parameter} & \textbf{value} \\ \hline

    optimization algorithm & ADAM \cite{Kingma_2014} \\ \hline
    learning rate & 0.01 \\ \hline
    weight initialization & Xavier \cite{Glorot_2010} \\ \hline
    number of hidden layers & 1 \\ \hline
    hidden layer size & 1024 \\ \hline
	mini-batch size &  256 \\ \hline
	training iterations & 10000 mini-batches \\ \hline

  \end{tabular}
  \end{center}
\end{table}

The network was implemented in TensorFlow and run in a Linux system with two Intel cores, one NVDIA Tesla K40 GPU. All the running time is reported in the given system.

\begin{figure*}
    \begin{center}
        \includegraphics[width = 1.0\textwidth]{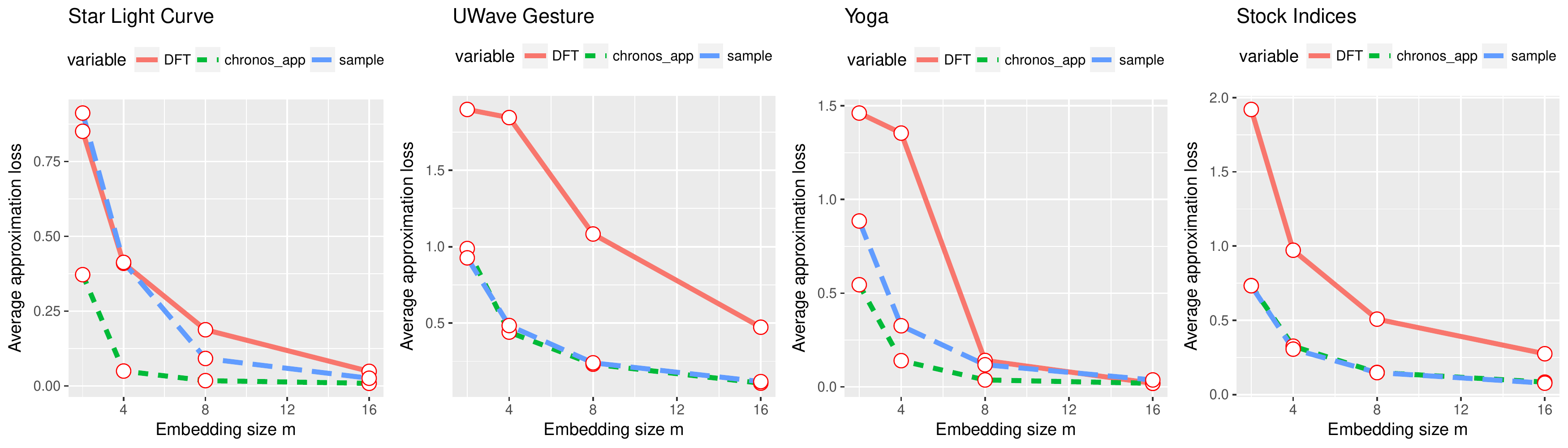}
    \caption{Results on correlation approximation}
    \label{fig:approximate}
    \end{center}
\end{figure*}

\subsection{Results on correlation approximation}
Figure \ref{fig:approximate} shows the comparison between the \textit{Chronos approximation} and DFT for approximated solutions to Problem \ref{prob:approximation}. Each subplot in the figure corresponds to a dataset. The vertical axis shows the approximation loss in the test set, while the horizontal axis shows the embedding size $m$, varying from 2 to 16. As can be observed, our method outperformed DFT in all the test cases with a significant margin. In particular, the approximation error was reduced at least by a half, especially for small embedding size. These empirical results confirm our theoretical analysis on the sub-optimality of DFT for Problem \ref{prob:approximation}.

It is interesting that the down-sample approach, although very simple, works very well in UWave and Stock datasets. Both Chronos and down-sample outperformed DFT in all cases. One possible explanation is that DFT seems to preserve approximation accuracy for high correlation (see Lemma 2 in \cite{Zhu_2002})  rather than optimize correlation approximation for a randomly picked pair of time series.

\subsection{Results on top-$k$ correlation search}
Figure \ref{fig:approximate} shows the comparison between the \textit{Chronos order}, the \textit{Chronos approximation}, DFT and down-sample for approximated solutions to Problem \ref{prob:top-k}. Each subplot in the figure corresponds to a pair of dataset and query size $k$ (varied from 10 to 100). The vertical axis shows the precision at $k$ in the test set, while the horizontal axis shows the embedding size $m$, varying from 2 to 16.

Although Theorem \ref{theorem:top-k} and Corollary \ref{cor:top-k} provide the same theoretically guarantee for the approximation bound of \textit{Chronos order} and \textit{Chronos order} on Problem \ref{prob:top-k}, our results show that the \textit{Chronos order} outperforms the \textit{Chronos approximation} in terms of the precision. These algorithms differ only in the loss function in training the networks: while \textit{Chronos approximation} directly minimizes the loss of correlation approximation, the \textit{Chronos order} also tries to preserve order information. This result implies that using the right loss function for a given query type is crucial in achieving good results.

In all cases, the \textit{Chronos order} algorithm outperforms the DFT baseline algorithms. On the other hand, except for the Stock dataset, \textit{Chronos approximation} algorithm outperforms the DFT baseline algorithm. The overall improvement varies from 5\% to 20\% depending on the value of $k$ and $m$. The down-sample algorithm although works very well in the Yoga dataset, its accuracy is very low for the other cases.

\begin{figure*}
    \begin{center}
        \includegraphics[width = 1.0\textwidth]{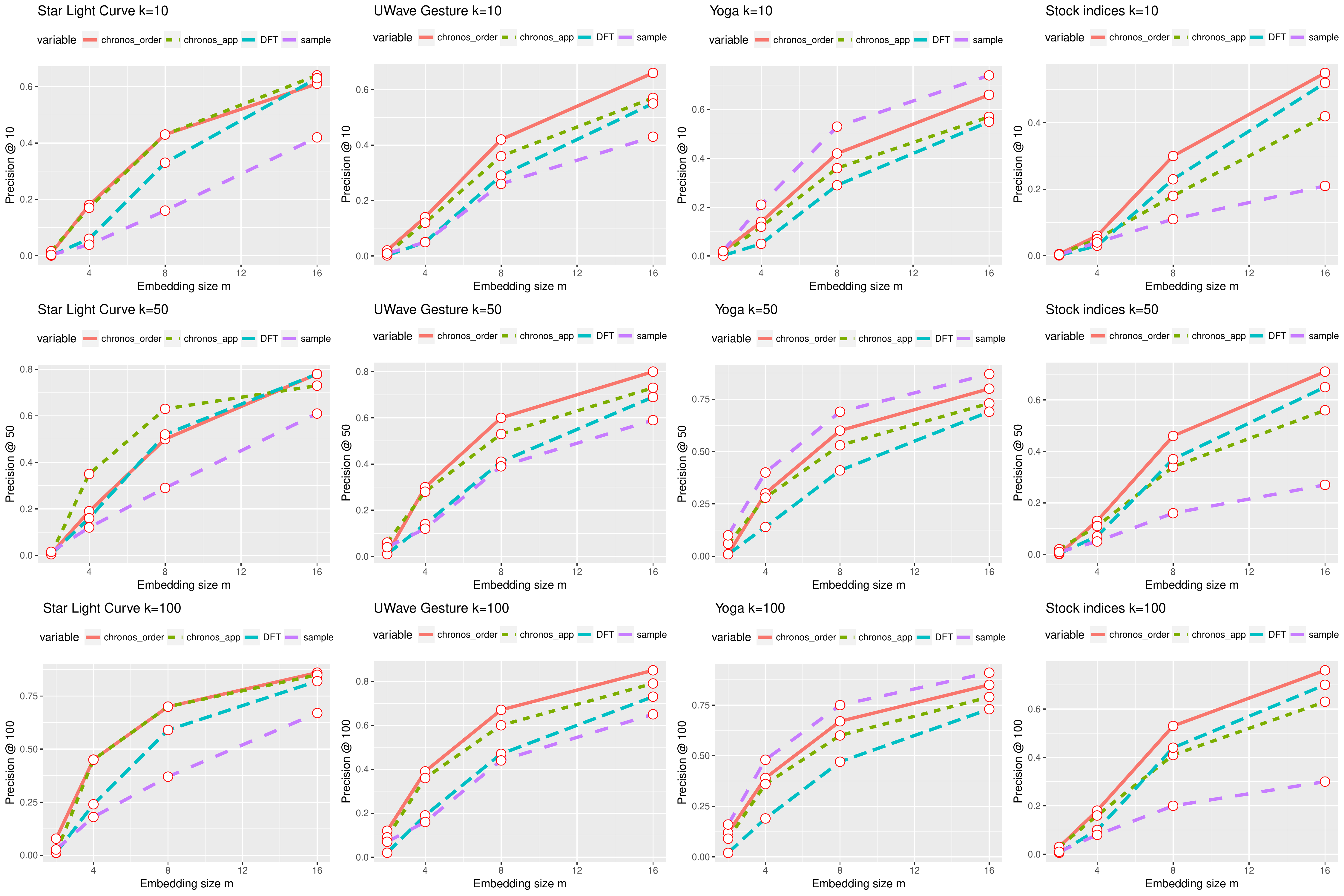}
    \caption{Results on top-$k$ correlation search}
    \label{fig:topk}
    \end{center}
\end{figure*}

\subsection{Training \& Query time}
Another important evaluation metric on our methods' performance is the time efficiency in training the embedding network $f(\mathbf{s}|\theta)$ and in subsequent function evaluation for incoming queries. Table \ref{tab:training} shows the training time of the \textit{Chronos Order} method in four datasets when $k=100, m=16$. For cases with other values of $k$ and $m$, the results are similar. We can see that it only takes about 15 minutes to train the network with 10000 mini-batches.
\begin{table}
  \begin{tabular}{ | c | c |}
    \hline
    \textbf{Data} & \textbf{\textit{Chronos order}}  \\ \hline
    Stock & 947  \\ \hline
    Yoga &  914   \\ \hline
    UWaveGesture &  894    \\ \hline
    StarLightCurves &  950  \\ \hline
  \end{tabular}
  \caption{Training time (seconds) with 10000 mini-batches} \label{tab:training}
\end{table}

Table \ref{tab:queries} shows the query time of different algorithms when $k=100, m=16$. Again, for cases with other values of $k$ and $m$, the results are similar. As can be seen, the queries time is not significantly influenced by the additional tasks for evaluating the function $f(\mathbf{s}|\theta)$. Since the queries time is dominated by the k-d tree traversal, time for evaluating the function $f(\mathbf{s}|\theta)$ is negligible. Query time using DFT is slightly faster than the the query time of Chronos order. Both methods have very quick response time per query being less than one millisecond in all experiments.

\begin{table}
  \begin{tabular}{ | c | c | c |}
    \hline
    \textbf{Data} & \textbf{\textit{Chronos order}} & \textbf{DFT}  \\ \hline
    Stock & 0.9 & 0.8 \\ \hline
    Yoga &  0.7 &  0.3    \\ \hline
    UWaveGesture &  0.7 & 0.3    \\ \hline
    StarLightCurves &  0.6 &  0.4   \\ \hline
  \end{tabular}
  \caption{Average top-$k$ query time (milliseconds)} \label{tab:queries}
\end{table}

\section{Related works}
\label{sec:related}
\textbf{Correlation related queries on time series.}
During the past several decades, people investigated a massive amount of correlation-related query problems on time series datasets in the literature. As our review will be far from exhaustive, here we rather provide several problem categories and some corresponding examples.

One major classification on these problems is whether a specific time series is given in the query. For example, some works \cite{Zhu_2002,Sakurai_2005,Mueen_2010,Li_2013} are interested in computing correlation among all sequence pairs or providing all one that are correlated (over a certain threshold), while others \cite{Faloutsos_1994,Rak_2012} aims at finding most correlated sequences in the dataset for the given one. There are also some papers addressing both sides \cite{Agrawal_1993}. In this paper, we focus on the later category that a specific time series is assumed to be given in the query.

Another major distinction is whether the object they consider is the whole sequence or a subsequence. For problems focusing on whole sequence \cite{Agrawal_1993,Zhu_2002,Mueen_2010,Rak_2012}, not only the query time series but also all sequences from the dataset are assume to have (almost) the same length. Our problems also make this assumption. For problems concerning subsequences \cite{Faloutsos_1994}, the query time series is generally much shorter than those in the dataset, which are also assumed to have arbitrary lengths.

Finally, an important distinction is whether the data is provided offline or collected online. On one hand, problems in the online setting \cite{Zhu_2002,Sakurai_2005} typically address the varying nature in data streams and discover stationary submodule in the computation procedure to boost query efficiency. On the other hand, problems in the offline settings \cite{Mueen_2010,Rak_2012}, including ours, usually deal with the difficulties brought by the massive data size.

\textbf{Dimension reduction.}
As far as we know, we are the first to suggest using function approximation, rather than algorithms for feature extraction, for dimension reduction in time series indexing and correlation searching problems. The application of dimension reduction techniques to improve time series correlation search efficiency is first introduced by Agrawal \textit{et al.} \cite{Agrawal_1993}. DFT and discrete wavelet transform (DWT) are the two most commonly used methods. The usage of DFT was introduced by Agrawal \textit{et al.} \cite{Agrawal_1993}, who also constructed $R^*$-tree for indexing on the transformed Fourier coefficients. Chan and Fu \cite{Chan_1999}, on the other hand, were the first to use wavelet analysis for time series embedding. Regardless of the significant difference between DFT and DWT approaches, it is shown that they have similar performance on query precision in a later paper by Wu \textit{et al.} \cite{Wu_2000}. Therefore, in this paper we only consider DFT as our baseline method.

We also notice that since Agrawal \textit{et al.} \cite{Agrawal_1993} researchers suggested many different dimensionality reduction techniques to provide tighter approximation bound for pruning. To name a few, there are Adaptive Piecewise Constant Approximation (APCA) \cite{Keogh_2001_APCA}, Piecewise Aggregate Approximation (PAA) \cite{Keogh_2001_PAA}, multi-scale segment mean (MSM) \cite{Lian_2009}, Chebyshev Polynomials (CP) \cite{Cai_2004}, and Piecewise Linear Approximation (PLA) \cite{Chen_2007}. However, according to the comparison work by Ding \textit{et al.} \cite{Ding_2008}, none of these significantly outperform DFT and DWT.

\textbf{Similarity measures.}
Though simple, Pearson correlation and Euclidean distance ($l_2$-norm) are still the most commonly used similarity measure in the literature. We also focus on this measure in this paper. However, Euclidean distance suffers from drawbacks such as fixed time series length and its applications in general are much restricted. Therefore, in many applications, more flexible similarity measure such as Dynamic time warping (DTW) \cite{Sakoe_1978} is more appropriate. DTW is an algorithm for similarity computation between general time series and can deal with time series with varying lengths and local time shifting. One of the major drawbacks of DTW is its computation complexity; therefore, most related research on DTW develop and discuss its scalable application in massive datasets \cite{Berndt_1994,Yi_1998,Keogh_2005,Rak_2012}.

There are also other similarity measures such as Longest Common Subsequence (LCSS) \cite{Boreczky_1996,Vlachos_2002}, but rather than concerning generalizability, they either focus on a specific type of data or problem related to the $l_2$-norm and DTW. They therefore did not attract much attention in the past decade. Interested readers are again referred to the comparison work by \cite{Ding_2008} for more details on similarity measures.

\textbf{Neural network approximation.}
With the recent successes in deep learning, neural networks again become the top choice for function approximators. Recent work by Kraska \textit{et al.} \cite{Kraska_2017} further illustrates that neural networks, with proper training, can beat baseline database indexes in terms of speed and space requirement.

There are also more neural-network-based methods in time series related problems. Cui \textit{et al.} \cite{Cui_2016} design an multi-scale convolution neural network for time series classification. Zheng \textit{et al.} \cite{Zheng_2015} apply convolution neural network for distance metric learning and classification. Both works show that the designed structures can achieve baseline performance when data is sufficiently large. However, in these works the networks are supervised to learn the mapping from time series to the given labels, while in this paper the network is developed for unsupervised dimension reduction and feature extraction of dataset structure.

\textbf{Approximation and exact algorithm}
When the data is small which fits memory entirely, the exact algorithm such as MASS by Mueen et al. \cite{mass} can be used to efficiently retrieve the exact correlations. However, in applications like the ones running at edge  when the data size is so big that does not fit available memory, we need a small index of the data which is several orders of magnitude smaller than the original size. One possible way to resolve the resource limitation issue is to down-sample the data to a desirable size that fits memory. This approach is similar to the embedding approach proposed in this paper  which trades between accuracy and resource usage. However, as we will see in the experiments, down-sampling does not work well because it is not designed to optimize the objective of the interested problems.

\textbf{Metric learning for time-series} Recently, there is a trend to use deep neural networks for learning metric for different time-series mining tasks. For instance, Garreau et al. proposed to learn a Mahalanobis distance to perform alignment of multivariate time series \cite{alignment}, Zhengping et al. \cite{decade}  proposed a method for learning similarity between multi-variate time-series. Pei et al. \cite{wenjie} used recurrent neural networks to learn time series metric for classification problems. Do et al. \cite{metricsknn} proposed a metric learning approach for k-NN classification problems. Our method is tightly related to these works because it can be considered as a metric learning approach. However, the key difference between our work and the prior work is in the objective function we optimize. While the objectives in the prior arts are designed for classification or time-series alignment, our objectives are designed to approximate correlation between time series in similarity search problems. Moreover, our significant contribution lies in the theoretical analysis of the approximation bounds for each specific type of the similar search queries.    
\section{Experiments}
\label{sec:experiments}
In this section, we discuss the experiment results. The  DFT method is considered as the baseline and compared with \textit{Chronos approximation} and \textit{Chronos order} methods in several different types of queries. Besides, we also compared the proposed approaches to the down-sample method in which the time series were embedded to a small embedding space by downs-sampling.   We first introduce the datasets and experimental settings.

\section{Conclusions and future work}
\label{sec:final}
In this paper, we propose a general approximation framework for a variety of correlation search queries in time series datasets. In this framework, we use Fourier transform and neural network to embed time series into low-dimensional Euclidean space, and construct nearest neighbor search structures in the embedding space for time series indexing to speed up queries. Our theoretical analysis illustrates that our method's accuracy can be guaranteed under certain regularity conditions, and our experiment results on real datasets indicate the superiority of our method over the baseline solution.

Several observations in this work are interesting and require more attention. First, the approximation accuracy of our method varies significantly across datasets. Therefore, it is crucial in later real-world applications to more systematically evaluate the applicability of our method and design network structure for specific datasets. Second, the selection of distance function $d$ can be important in improving the performance of our embedding method, since it might be able to balance between the approximation capability of the neural network and the internal similarity within the datasets.

Our work can be further extended in several directions. First, our method can only deal with time series with equal length. However, a large number of real-world time series are sampled with irregular time frequency or during arbitrary time period. Therefore, embedding methods that can deal with general time series should be explored in the future. Second, currently we only consider Pearson correlation as the evaluation metric for similarity. We can investigate the applicability of our framework on other types of similarity measure such as the dynamic time wrapping (DTW) measure. Finally, in some of the aforementioned applications such as time series forecasting, correlation search for the target time series might not provide the most essential information in improving the ultimate objectives. End-to-end systems explicitly connecting correlation search techniques and application needs might therefore be more straightforward and effective.


\bibliographystyle{acm}
\bibliography{ts}

\begin{thebibliography}{10}

\bibitem{YahooFin}
Yahoo finance.
\newblock \url{https://finance.yahoo.com/}.
\newblock Accessed: 2017-06-30.

\bibitem{Agrawal_1993}
{\sc Agrawal, R., Faloutsos, C., and Swami, A.}
\newblock Efficient similarity search in sequence databases.
\newblock In {\em International Conference on Foundations of Data Organization
  and Algorithms\/} (1993), Springer, pp.~69--84.

\bibitem{Bahdanau_2014}
{\sc Bahdanau, D., Cho, K., and Bengio, Y.}
\newblock Neural machine translation by jointly learning to align and
  translate.
\newblock {\em arXiv preprint arXiv:1409.0473\/} (2014).

\bibitem{Bentley_1975}
{\sc Bentley, J.~L.}
\newblock Multidimensional binary search trees used for associative searching.
\newblock {\em Communications of the ACM 18}, 9 (1975), 509--517.

\bibitem{Berndt_1994}
{\sc Berndt, D.~J., and Clifford, J.}
\newblock Using dynamic time warping to find patterns in time series.
\newblock In {\em KDD workshop\/} (1994), vol.~10, Seattle, WA, pp.~359--370.

\bibitem{Boreczky_1996}
{\sc Boreczky, J.~S., and Rowe, L.~A.}
\newblock Comparison of video shot boundary detection techniques.
\newblock {\em Journal of Electronic Imaging 5}, 2 (1996), 122--129.

\bibitem{Cai_2004}
{\sc Cai, Y., and Ng, R.}
\newblock Indexing spatio-temporal trajectories with chebyshev polynomials.
\newblock In {\em Proceedings of the 2004 ACM SIGMOD international conference
  on Management of data\/} (2004), ACM, pp.~599--610.

\bibitem{Chan_1999}
{\sc Chan, K.-P., and Fu, A. W.-C.}
\newblock Efficient time series matching by wavelets.
\newblock In {\em Data Engineering, 1999. Proceedings., 15th International
  Conference on\/} (1999), IEEE, pp.~126--133.

\bibitem{decade}
{\sc Che, Z., He, X., Xu, K., and Liu, Y.}
\newblock Decade: A deep metric learning model for multivariate time series.

\bibitem{Chen_2007}
{\sc Chen, Q., Chen, L., Lian, X., Liu, Y., and Yu, J.~X.}
\newblock Indexable pla for efficient similarity search.
\newblock In {\em Proceedings of the 33rd international conference on Very
  large data bases\/} (2007), VLDB Endowment, pp.~435--446.

\bibitem{UCRArchive}
{\sc Chen, Y., Keogh, E., Hu, B., Begum, N., Bagnall, A., Mueen, A., and
  Batista, G.}
\newblock The ucr time series classification archive, July 2015.
\newblock \url{https://www.cs.ucr.edu/~eamonn/time_series_data/}.

\bibitem{Cho_2014}
{\sc Cho, K., Van~Merri{\"e}nboer, B., Gulcehre, C., Bahdanau, D., Bougares,
  F., Schwenk, H., and Bengio, Y.}
\newblock Learning phrase representations using rnn encoder-decoder for
  statistical machine translation.
\newblock {\em arXiv preprint arXiv:1406.1078\/} (2014).

\bibitem{Cui_2016}
{\sc Cui, Z., Chen, W., and Chen, Y.}
\newblock Multi-scale convolutional neural networks for time series
  classification.
\newblock {\em arXiv preprint arXiv:1603.06995\/} (2016).

\bibitem{Dallachiesa_2014}
{\sc Dallachiesa, M., Palpanas, T., and Ilyas, I.~F.}
\newblock Top-k nearest neighbor search in uncertain data series.
\newblock {\em Proceedings of the VLDB Endowment 8}, 1 (2014), 13--24.

\bibitem{Ding_2008}
{\sc Ding, H., Trajcevski, G., Scheuermann, P., Wang, X., and Keogh, E.}
\newblock Querying and mining of time series data: experimental comparison of
  representations and distance measures.
\newblock {\em Proceedings of the VLDB Endowment 1}, 2 (2008), 1542--1552.

\bibitem{metricsknn}
{\sc Do, C., Chouakria, A.~D., Mari{\'{e}}, S., and Rombaut, M.}
\newblock Temporal and frequential metric learning for time series knn
  classification.
\newblock In {\em Proceedings of the 1st International Workshop on Advanced
  Analytics and Learning on Temporal Data, {AALTD} 2015, co-located with The
  European Conference on Machine Learning and Principles and Practice of
  Knowledge Discovery in Databases {(ECML} {PKDD} 2015), Porto, Portugal,
  September 11, 2015.\/} (2015).

\bibitem{Donahue_2015}
{\sc Donahue, J., Anne~Hendricks, L., Guadarrama, S., Rohrbach, M.,
  Venugopalan, S., Saenko, K., and Darrell, T.}
\newblock Long-term recurrent convolutional networks for visual recognition and
  description.
\newblock In {\em Proceedings of the IEEE conference on computer vision and
  pattern recognition\/} (2015), pp.~2625--2634.

\bibitem{Faloutsos_1994}
{\sc Faloutsos, C., Ranganathan, M., and Manolopoulos, Y.}
\newblock {\em Fast subsequence matching in time-series databases}, vol.~23.
\newblock ACM, 1994.

\bibitem{Friedman_1977}
{\sc Friedman, J.~H., Bentley, J.~L., and Finkel, R.~A.}
\newblock An algorithm for finding best matches in logarithmic expected time.
\newblock {\em ACM Transactions on Mathematical Software (TOMS) 3}, 3 (1977),
  209--226.

\bibitem{alignment}
{\sc Garreau, D., Lajugie, R., Arlot, S., and Bach, F.}
\newblock Metric learning for temporal sequence alignment.
\newblock In {\em Advances in Neural Information Processing Systems 27},
  Z.~Ghahramani, M.~Welling, C.~Cortes, N.~D. Lawrence, and K.~Q. Weinberger,
  Eds. Curran Associates, Inc., 2014, pp.~1817--1825.

\bibitem{Glorot_2010}
{\sc Glorot, X., and Bengio, Y.}
\newblock Understanding the difficulty of training deep feedforward neural
  networks.
\newblock In {\em Proceedings of the Thirteenth International Conference on
  Artificial Intelligence and Statistics\/} (2010), pp.~249--256.

\bibitem{Hochreiter_1997}
{\sc Hochreiter, S., and Schmidhuber, J.}
\newblock Long short-term memory.
\newblock {\em Neural computation 9}, 8 (1997), 1735--1780.

\bibitem{Keogh_2001_PAA}
{\sc Keogh, E., Chakrabarti, K., Pazzani, M., and Mehrotra, S.}
\newblock Dimensionality reduction for fast similarity search in large time
  series databases.
\newblock {\em Knowledge and information Systems 3}, 3 (2001), 263--286.

\bibitem{Keogh_2001_APCA}
{\sc Keogh, E., Chakrabarti, K., Pazzani, M., and Mehrotra, S.}
\newblock Locally adaptive dimensionality reduction for indexing large time
  series databases.
\newblock {\em ACM Sigmod Record 30}, 2 (2001), 151--162.

\bibitem{Keogh_2005}
{\sc Keogh, E., and Ratanamahatana, C.~A.}
\newblock Exact indexing of dynamic time warping.
\newblock {\em Knowledge and information systems 7}, 3 (2005), 358--386.

\bibitem{Kingma_2014}
{\sc Kingma, D., and Ba, J.}
\newblock Adam: A method for stochastic optimization.
\newblock {\em arXiv preprint arXiv:1412.6980\/} (2014).

\bibitem{Kraska_2017}
{\sc Kraska, T., Beutel, A., Chi, E.~H., Dean, J., and Polyzotis, N.}
\newblock The case for learned index structures.
\newblock {\em arXiv preprint arXiv:1712.01208\/} (2017).

\bibitem{Li_2013}
{\sc Li, Y., Yiu, M.~L., and Gong, Z.}
\newblock Discovering longest-lasting correlation in sequence databases.
\newblock {\em Proceedings of the VLDB Endowment 6}, 14 (2013), 1666--1677.

\bibitem{Lian_2009}
{\sc Lian, X., Chen, L., Yu, J.~X., Han, J., and Ma, J.}
\newblock Multiscale representations for fast pattern matching in stream time
  series.
\newblock {\em IEEE transactions on knowledge and data engineering 21}, 4
  (2009), 568--581.

\bibitem{Mueen_2010}
{\sc Mueen, A., Nath, S., and Liu, J.}
\newblock Fast approximate correlation for massive time-series data.
\newblock In {\em Proceedings of the 2010 ACM SIGMOD International Conference
  on Management of data\/} (2010), ACM, pp.~171--182.

\bibitem{mass}
{\sc Mueen, A., Zhu, Y., Yeh, M., Kamgar, K., Viswanathan, K., Gupta, C., and
  Keogh, E.}
\newblock The fastest similarity search algorithm for time series subsequences
  under euclidean distance, August 2017.
\newblock \url{http://www.cs.unm.edu/~mueen/FastestSimilaritySearch.html}.

\bibitem{wenjie}
{\sc Pei, W., Tax, D. M.~J., and van~der Maaten, L.}
\newblock Modeling time series similarity with siamese recurrent networks.
\newblock {\em CoRR abs/1603.04713\/} (2016).

\bibitem{Rak_2012}
{\sc Rakthanmanon, T., Campana, B., Mueen, A., Batista, G., Westover, B., Zhu,
  Q., Zakaria, J., and Keogh, E.}
\newblock Searching and mining trillions of time series subsequences under
  dynamic time warping.
\newblock In {\em Proceedings of the 18th ACM SIGKDD international conference
  on Knowledge discovery and data mining\/} (2012), ACM, pp.~262--270.

\bibitem{Rudin_1964}
{\sc Rudin, W., et~al.}
\newblock {\em Principles of mathematical analysis}, vol.~3.
\newblock McGraw-hill New York, 1964.

\bibitem{Sakoe_1978}
{\sc Sakoe, H., and Chiba, S.}
\newblock Dynamic programming algorithm optimization for spoken word
  recognition.
\newblock {\em IEEE transactions on acoustics, speech, and signal processing
  26}, 1 (1978), 43--49.

\bibitem{Sakurai_2005}
{\sc Sakurai, Y., Papadimitriou, S., and Faloutsos, C.}
\newblock Braid: Stream mining through group lag correlations.
\newblock In {\em Proceedings of the 2005 ACM SIGMOD international conference
  on Management of data\/} (2005), ACM, pp.~599--610.

\bibitem{Snoek_2012}
{\sc Snoek, J., Larochelle, H., and Adams, R.~P.}
\newblock Practical bayesian optimization of machine learning algorithms.
\newblock In {\em Advances in neural information processing systems\/} (2012),
  pp.~2951--2959.

\bibitem{Vlachos_2002}
{\sc Vlachos, M., Kollios, G., and Gunopulos, D.}
\newblock Discovering similar multidimensional trajectories.
\newblock In {\em Data Engineering, 2002. Proceedings. 18th International
  Conference on\/} (2002), IEEE, pp.~673--684.

\bibitem{Wu_2000}
{\sc Wu, Y.-L., Agrawal, D., and El~Abbadi, A.}
\newblock A comparison of dft and dwt based similarity search in time-series
  databases.
\newblock In {\em Proceedings of the ninth international conference on
  Information and knowledge management\/} (2000), ACM, pp.~488--495.

\bibitem{Yi_1998}
{\sc Yi, B.-K., Jagadish, H., and Faloutsos, C.}
\newblock Efficient retrieval of similar time sequences under time warping.
\newblock In {\em Data Engineering, 1998. Proceedings., 14th International
  Conference on\/} (1998), IEEE, pp.~201--208.

\bibitem{Zheng_2015}
{\sc Zheng, Y., Liu, Q., Chen, E., Zhao, J.~L., He, L., and Lv, G.}
\newblock Convolutional nonlinear neighbourhood components analysis for time
  series classification.
\newblock In {\em Pacific-Asia Conference on Knowledge Discovery and Data
  Mining\/} (2015), Springer, pp.~534--546.

\bibitem{Zhu_2002}
{\sc Zhu, Y., and Shasha, D.}
\newblock Statstream: Statistical monitoring of thousands of data streams in
  real time.
\newblock In {\em VLDB'02: Proceedings of the 28th International Conference on
  Very Large Databases\/} (2002), Elsevier, pp.~358--369.

\end{thebibliography}

\end{document}